\documentclass{article} 
\usepackage{iclr2027_conference,times}

\newif\ifpreprint
\preprinttrue
\ifpreprint\iclrfinalcopy\fi

\usepackage{amsmath,amsfonts,bm}

\def\eqref#1{equation~\ref{#1}}

\def\1{\bm{1}}

\DeclareMathAlphabet{\mathsfit}{\encodingdefault}{\sfdefault}{m}{sl}
\SetMathAlphabet{\mathsfit}{bold}{\encodingdefault}{\sfdefault}{bx}{n}

\usepackage[utf8]{inputenc} 
\usepackage[T1]{fontenc}    
\usepackage{hyperref}       
\usepackage{url}            
\usepackage{booktabs}       
\usepackage{amsfonts}       
\usepackage{nicefrac}       
\usepackage{microtype}      
\usepackage[table]{xcolor}

\usepackage{graphicx}
\usepackage{amsmath}
\usepackage{amssymb}
\usepackage{algpseudocodex}

\usepackage{listings}
\usepackage{mwe} 
\usepackage{makecell}
\usepackage{color, colortbl}
\usepackage[normalem]{ulem} 
\usepackage{algorithm}

\usepackage{wrapfig}
\usepackage{caption}
\usepackage{pifont} 
\usepackage{multirow}
\usepackage{booktabs}
\usepackage{float}

\usepackage{chngpage}
\usepackage{wrapfig}
\usepackage{adjustbox}
\usepackage{comment}
\usepackage{color}

\usepackage{listings}

\usepackage{bm}
\usepackage[capitalize]{cleveref}

\usepackage[toc]{appendix}
\usepackage{etoc}
\usepackage{minitoc}
\usepackage{mathtools}

\usepackage{amsthm}

\newtheorem{theorem}{Theorem}[section]        

\theoremstyle{definition}

\theoremstyle{remark}

\usepackage{tikz}
\usetikzlibrary{
    arrows.meta,
    calc,
    positioning,
    decorations.pathreplacing,
    fit,
    shapes.geometric
}
\usepackage{placeins}
\usepackage{subcaption}
\crefname{section}{Sec.}{Secs.}
\Crefname{section}{Section}{Sections}
\Crefname{table}{Table}{Tables}
\crefname{table}{Tab.}{Tabs.}
\etocsettocstyle{\section*{Appendix}}{}

\definecolor{blsbg}{HTML}{EBF2E7}
\definecolor{cnpgreen}{RGB}{43, 122, 63}

\usepackage{cleveref}
\crefname{table}{Table}{Tables}
\usepackage[normalem]{ulem}

\newcommand{\at}[1]{{\tiny @#1}}

\newcommand{\pmstd}[1]{{\kern0.3pt\raisebox{0.1pt}{\scalebox{0.6}{\ensuremath{\,\pm\,}#1}}}}

\definecolor{Gray}{gray}{0.9}
\definecolor{ImportantColor}{rgb}{0.63, 0.79, 0.95}
\newcolumntype{g}{>{\columncolor{ImportantColor}}c}
\newcolumntype{?}{!{\vrule width 1pt}}
\graphicspath{
{figs/},
}

\newcommand*{\belowrulesepcolor}[1]{%
  \noalign{%
    \kern-\belowrulesep 
    \begingroup 
      \color{#1}%
      \hrule height\belowrulesep 
    \endgroup 
  }%
} 
\newcommand*{\aboverulesepcolor}[1]{%
  \noalign{%
    \begingroup 
      \color{#1}%
      \hrule height\aboverulesep 
    \endgroup 
    \kern-\aboverulesep 
  }%
}

\definecolor{codegreen}{rgb}{0,0.6,0}
\definecolor{codegray}{rgb}{0.5,0.5,0.5}
\definecolor{codepurple}{rgb}{0.58,0,0.82}
\definecolor{backcolour}{rgb}{0.95,0.95,0.92}

\lstdefinestyle{mystyle}{
    backgroundcolor=\color{backcolour},   
    commentstyle=\color{codegreen},
    keywordstyle=\color{magenta},
    numberstyle=\tiny\color{codegray},
    stringstyle=\color{codepurple},
    basicstyle=\ttfamily\footnotesize,
    breakatwhitespace=false,         
    breaklines=true,                 
    captionpos=b,                    
    keepspaces=true,                 
    numbers=left,                    
    numbersep=5pt,                  
    showspaces=false,                
    showstringspaces=false,
    showtabs=false,                  
    tabsize=2
}

\hypersetup{hidelinks}
\renewcommand{\bottomfraction}{0.8}
\renewcommand{\floatpagefraction}{0.8}

\newcommand{\restoreappendixlayout}{%
\setcounter{topnumber}{2}
\setcounter{totalnumber}{3}
\setlength{\textfloatsep}{20pt plus 2pt minus 4pt}
\setlength{\floatsep}{12pt plus 2pt minus 2pt}
\setlength{\intextsep}{12pt plus 2pt minus 2pt}
\renewcommand{\bottomfraction}{0.3}
\renewcommand{\floatpagefraction}{0.5}
\captionsetup{font=normalsize,skip=10pt}
}

\newcommand{\method}{BLS}

\title{Task Inference Beyond Least Squares in Behavioral Foundation Models}

\author{%
{
Kuan-Hsun Tu$^{1*}$,
Chien-Sheng Chiang$^{1\dagger *}$,
Hsin-Wei Chen$^{1}$,
Ping-Chun Hsieh$^{2}$,
and Tsung-Wei Ke$^{1}$
}\\[3pt]
{
$^{1}$National Taiwan University,
\quad
$^{2}$National Yang Ming Chiao Tung University
}
}

\begin{document}

\maketitle
\ifpreprint
\lhead{Preprint}
{\renewcommand{\thefootnote}{\fnsymbol{footnote}}\footnotetext[1]{Equal contribution. \quad $^{\dagger}$Corresponding author.}}
\fi

\begin{abstract}
Behavioral Foundation Models (BFMs) aim to solve a wide range of downstream tasks without test-time policy learning by inferring a task vector from the reward function. While efficient, the retrieved policies are often suboptimal because of how this task vector is inferred, typically with ordinary least
squares (OLS). OLS minimizes reward reconstruction error but leaves the ordering of rewards unconstrained, which can bias the successor measure of the retrieved zero-shot policy away from that of the optimal policy. In this work, we propose \method{}, an efficient test-time inference method that balances minimizing reward reconstruction error with reducing successor-measure mismatch. Theoretically, we provide a suboptimality gap upper bound characterized by both successor-measure and reward-function residuals. Empirically, we evaluate \method{} on top of state-of-the-art BFMs across benchmarks for locomotion, manipulation, and humanoid control. \method{} outperforms existing task inference baselines with negligible computational overhead. Project page: \href{https://embodiedai-ntu.github.io/BLS/index.html}{\textcolor{cnpgreen}{https://embodiedai-ntu.github.io/BLS}}
\end{abstract}

\begin{figure}[h]
    \centering
    \includegraphics[width=0.90\textwidth]{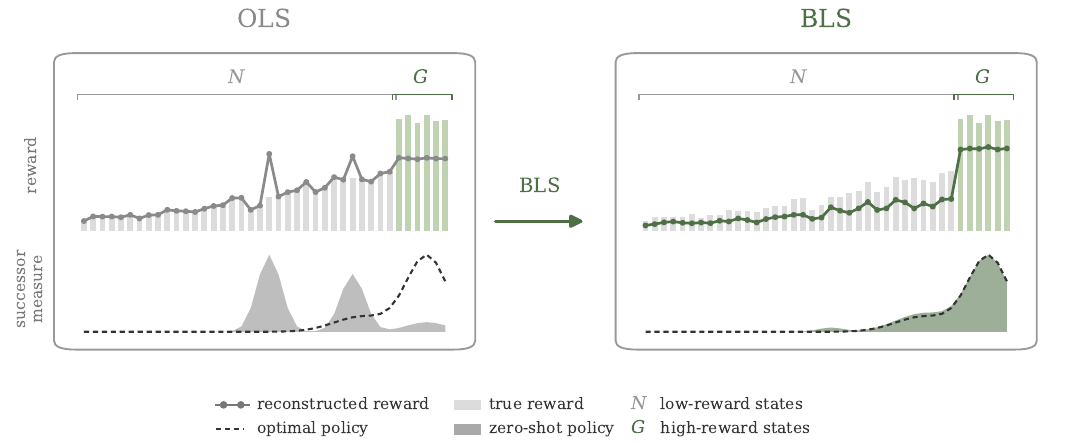}
    \caption{Ordinary least squares (OLS) aims to minimize the total errors of reconstructed rewards. Its performance highly depends on the quality of state representations, the state coverage within the dataset, and the distribution of rewards. Because OLS can be biased toward the majority of states, low-reward states ($N$) may be assigned with high-value rewards and vice versa for high-reward states ($G$). The successor measure of the retrieved zero-shot policy can therefore diverge from that of the optimal policy. \method{} instead adopts an objective that reduces the reward reconstruction errors while enlarging the margin between high- and low-reward states. Thus, the successor measure of the retrieved zero-shot policy is encouraged to stay closer to the optimal policy's.
    }
    \label{fig:teaser}
\end{figure}

\section{Introduction}

Unsupervised zero-shot reinforcement learning (zero-shot RL; \citealp{touati2022does}) aims to offer a more general behavior learning framework than typical task-specific RL.  During training, it does not require downstream task-specific reward functions, instead learning a repertoire of policies directly from interactions within the environment.  At test time, given a downstream reward function, it can immediately retrieve a corresponding policy without additional policy learning or planning. Models trained via zero-shot RL can jointly handle downstream tasks in diverse forms, such as tracking reference trajectories, reaching goals, optimizing rewards, or following semantic instructions~\citep{tirinzoni2025zero,sikchi2024rl}.  Therefore, these models are often termed \textit{Behavioral Foundation Models} (BFMs; \citealp{pirotta2024fast}).

To empower zero-shot capabilities, BFMs rely on a linear formulation that connects arbitrary tasks with their reward functions.  A BFM represents every state and \textit{all possible} tasks as vectors in a joint latent space, where their inner products define linear reward functions~\citep{barreto2017successor}. It then trains policies conditioned on each task vector using the associated reward. Given any reward function at test time, the BFM directly obtains the corresponding policy by inferring the task vector through linear regression, with ordinary least squares (OLS) being the most commonly used solver in prior work~\citep{bagatella2025td}.

Although OLS minimizes reconstruction errors of a reward function among all states, it does not guarantee that the retrieved zero-shot policy is the best-performing one for the reward. The performance of a zero-shot policy heavily depends on the quality of state representations, the coverage of states within the dataset, and the distribution of rewards per task~\citep{sikchi2025fast}.  Figure~\ref{fig:teaser} presents an example: when rewards are dense but the highest-reward states are sparse, OLS is biased toward the majority of states, which are low-reward states in this example.  This can leave the reconstruction of high-value rewards inaccurate, causing the retrieved zero-shot policy to become suboptimal. To tackle this, \citet{sikchi2025fast} propose a fast adaptation strategy that minimizes the residual errors of OLS reconstruction but needs task-specific online interactions.  \citet{asadulaev2026zero} instead search for a task vector that maximizes rewards, while constraining the retrieved zero-shot policy to follow trajectories within the dataset, which are not optimal for the downstream task.  Current methods remain restricted.

We propose \method{}, a simple yet effective offline zero-shot inference framework \underline{B}eyond \underline{L}east \underline{S}quares for BFMs. We first identify two sources of error that cause the suboptimality of zero-shot policies: drift in reward reconstruction and drift in state visitation distributions.  While the former is covered by OLS, the latter is broadly overlooked in existing approaches.  We then make two key observations of the optimal policy's state visitation distributions: (1) high-reward states are more frequently visited than low-reward states, (2) fixing the relative ranking of rewards also helps to preserve the optimal policy's behavior.  To achieve both objectives, we propose a soft-margin contrastive loss that enlarges the margin in reconstructed rewards between high- and low-value original rewards. Finally, \method{} jointly optimizes the task vector using the soft-margin loss and a trust-region loss with respect to OLS's solution.  With the updated task vector, \method{} aims to retrieve a zero-shot policy closer to the optimal policy than the one found by OLS.

We validate \method{} with extensive experiments across several established RL benchmarks. On OGBench, \method{} improves the average performance of all five feature representations under both proprioceptive and pixel-based observations. On DMC and HumEnv, \method{} improves 18 out of 20 and 42 out of 45 dense-reward tasks, respectively. In particular, we find that \method{} remains effective across different feature representations, reward structures, and task vector anchors. Furthermore, \method{} incurs negligible computational overhead compared to OLS while bringing substantial performance gains. Finally, these results show that \method{} is not merely an OLS correction but a general task-vector refinement framework for BFMs.


\section{Related Work}
\paragraph{Behavioral Foundation Models.}

Behavioral Foundation Models (BFMs) build on successor features~\cite{dayan1993improving, barreto2017successor} to learn a family of policies. Early approaches typically rely on separately learned state representations~\cite{wu2018laplacian, ghosh2023reinforcement, park2024foundation} and approximate the successor features induced by these fixed representations. Later, \citet{touati2021learning, blier2021learning, touati2022does} introduce forward-backward representations that jointly learn the state features and successor measure. This approach achieves state-of-the-art performance on classical locomotion and manipulation tasks and has also been extended to humanoid control~\cite{tirinzoni2025zero, li2026bfm}. Recent work further explores applications including zero-shot imitation learning~\cite{pirotta2024fast}, general utility functions~\cite{bagatella2026soft}, and preference-based RL~\cite{yang2026reward}. To better model successor measures, TD-JEPA~\cite{bagatella2025td} learns latent predictive representations that capture long-term dynamics, while \citet{farebrother2025temporal} uses flow matching to approximate the successor measure. Rather than modifying how representations and successor measures are trained, we optimize the inferred task vector $z$ while keeping the representation, successor measure, and policy fixed.

\paragraph{Inferring the task vector.}
In prior work on BFMs, ordinary least squares (OLS) is the most commonly used method to infer the task vector. To mitigate the reward projection error, ReLA~\cite{sikchi2025fast} proposes an off-policy actor-critic method that trains a small residual critic to compensate for the reward projection error while updating only the task vector $z$. LoLA~\cite{sikchi2025fast} is an actor-only alternative that optimizes a distribution over $z$ from fixed-horizon on-policy rollouts bootstrapped by the frozen pretrained critic. Other works instead study task inference under different settings. OpTI-BFM~\cite{rupf2026optimistic} considers a practical setting in which reward labels are obtained only through environment interaction. It maintains uncertainty over the reward function from observed feature--reward pairs and selects $z$ optimistically within the resulting confidence set, casting task inference as a linear bandit problem. ZOL~\cite{asadulaev2026zero} connects successor measures to stationary density ratios, showing that forward--backward representations implicitly encode the importance sampling weights between the policy- and data-induced occupancies. It uses these weights to perform distribution correction on the offline task inference objective, without additional training or interaction.

These works either operate within the learned feature basis or use online interaction to compensate for residual error. In contrast, \method{} performs offline test-time task-vector refinement directly from reward-feature samples, using a surrogate objective motivated by our suboptimality gap.
\section{Preliminaries}
We consider a reward-free Markov Decision Process (MDP) $\mathcal{M} = (\mathcal{S}, \mathcal{A}, P, \gamma, \rho)$, where $\mathcal{S}$ and $\mathcal{A}$ denote the state and action spaces, $P(\cdot \mid s, a)$ is the transition kernel, $\gamma \in (0,1)$ is the discount factor, and $\rho$ is the initial state distribution. A downstream task is specified by a bounded reward function $r : \mathcal{S} \to \mathbb{R}$. An agent interacts with the MDP according to a policy $\pi:\mathcal{S}\rightarrow \mathrm{Prob}(\mathcal{A})$. Let $V_r^\pi = \mathbb{E}_{s_0 \sim \rho, a_t \sim \pi(\cdot \mid s_t), s_{t+1} \sim P(\cdot \mid s_t, a_t)}\left[ \sum_{t=0}^{\infty} \gamma^{t} r(s_{t+1}) \right]$ denote the expected cumulative discounted return, or \textit{value}, of a policy $\pi$ under a reward function $r$. RL aims to find an optimal policy $\pi_r$ that maximizes the value: $\pi_r = \arg\max_{\pi} V_r^\pi$.

\paragraph{Successor measure.}
Executing a policy $\pi$ in the MDP induces stochastic trajectories, which can be described by the discounted visitation measure over future states, also termed the \textit{successor measure}~\cite{dayan1993improving}.  Formally, it is written as: $ M^\pi(s^+ \mid s,a)=\sum_{t=0}^{\infty}\gamma^t\mathrm{Pr}\left(s_{t+1}=s^+ \mid s, a; \pi\right)$, 
where $\mathrm{Pr}\left(s_{t+1}\mid s, a;\pi\right)$ is the probability of visiting state $s_{t+1}$ at time step $t+1$ starting from initial state-action pair $(s, a)$, following the environment dynamics $s_{t+1} \sim P(\cdot | s_{t}, a_{t})$ and the policy $a_t \sim \pi(\cdot | s_t)$.  
With the successor measure, the value $V_r^\pi$ and the state-action value function $Q_r^\pi: \mathcal{S} \times \mathcal{A} \to \mathbb{R}$ of a policy $\pi$ under a reward function $r$ can be written as $V_r^\pi=\mathbb{E}_{s_0 \sim \rho, a_0 \sim \pi(\cdot \mid s_0)} \left[ \sum_{s^+ \in \mathcal{S}} M^\pi(s^+ \mid s_0, a_0) r(s^+) \right]$ and $Q_r^\pi(s, a)=\sum_{s^+ \in \mathcal{S}} M^\pi(s^+ \mid s, a) r(s^+)$.

\paragraph{Zero-shot RL and behavior foundation models.}
Unlike typical RL that learns an optimal policy based on a reward function specifying a task, zero-shot RL seeks to optimize policies for \textit{all possible} tasks without any reward function given during training.  To eliminate the need for handcrafting rewards, zero-shot RL adopts a linear form that associates every reward function with state features~\citep{touati2021learning}: $r_z(s)=\phi(s)^\top z$, where $\phi : \mathcal{S} \to \mathbb{R}^d$ is the state feature map, and $z \in \mathbb{R}^d$ denotes the task vector characterized by the reward function $r_z$.  This formulation automatically constructs reward functions representing arbitrary tasks with randomly sampled task vectors.  To address challenges in learning policies for every task, zero-shot RL combines the linear formulation with the successor measure, resulting in a self-supervised learning scheme~\citep{touati2022does}:
\begin{equation}
\begin{cases}
M^{\pi_z}(\cdot \mid s,a)
=
P(\cdot \mid s,a)
+
\gamma
\mathbb{E}_{s' \sim P(\cdot \mid s,a),\, a' \sim \pi_z(\cdot \mid s')}
\left[
M^{\pi_z}(\cdot \mid s',a')
\right], \\

Q_{r_z}^{\pi_z}(s,a)
=
\sum_{s^+ \in \mathcal{S}}
M^{\pi_z}(s^+ \mid s,a)\, r_z(s^+)
=
\sum_{s^+ \in \mathcal{S}}
M^{\pi_z}(s^+ \mid s,a)\, \phi(s^+)^\top z, \\

\pi_z(s)
=
\underset{a}{\mathrm{argmax}}\;
Q_{r_z}^{\pi_z}(s,a).
\end{cases}
\end{equation}
where $\pi_z$ denotes the policy conditioned on the task vector $z$.  In other words, for each $z$ and the policy $\pi_z$, zero-shot RL trains to model the policy's successor measure $M^{\pi_z}$ with the Bellman equation, while optimizing the policy to maximize state-action values $Q_{r_z}^{\pi_z}$ estimated with the successor measure and the reward $r_z$.  At inference, given any reward function $r$, the task vector is calculated via OLS:
\begin{align}
\hat{z}_{\mathrm{OLS}} = \underset{z}{\mathrm{argmin}} \mathbb{E}_s \left[ \lVert \phi(s)^\top z - r(s) \rVert_2^2 \right].
\label{eq:ols}
\end{align}
With $\hat{z}_{\mathrm{OLS}}$, we retrieve the policy $\pi_{\hat{z}_{\mathrm{OLS}}}$ for the task represented by the reward function.  Because we can directly obtain policies for any task (reward functions) with no extra training or planning, these models are often referred to as BFMs. Notably, if the reward function $r$ lies in the span of the learned features $\phi$, OLS exactly reconstructs $r$, and the retrieved zero-shot policy $\pi_{\hat{z}_{\mathrm{OLS}}}$ is optimal for the task~\citep{ollivier2025features}. However, arbitrary downstream rewards may not lie in this feature span in practice, leading to non-negligible reconstruction errors and degraded task performance.


\section{Beyond Least Squares}
BLS is an effective zero-shot inference framework for BFMs, aiming to retrieve a better-performing policy than OLS for any reward function given at test time. We first analyze the value discrepancy, or \textit{suboptimality gap}, between the optimal policy and a policy corresponding to a reconstructed reward function. We show that the suboptimality gap depends on both the reward reconstruction error and the difference between the two policies' successor measures, each of which motivates an objective for zero-shot inference. While the former coincides with OLS (Eqn.~\ref{eq:ols}), the latter is intractable without knowing the optimal policy's successor measure in advance.

To bridge the gap, we gain insights from inductive biases of successor measures: 
optimal policies visit high-reward states more frequently than low-reward states, and reward functions sharing similar reward rankings tend to induce similar successor measures\footnote{This is an empirical observation rather than a theoretical guarantee, though we find it holds in most practical settings.}.  We propose a soft-margin contrastive loss to improve the reconstruction of high-value rewards and preserve the relative ranking of rewards. The task vector is optimized with both objectives via gradient descent.

We present the suboptimality gap analysis in Section~\ref{subsec:mtd_theorem} and the optimization objectives in Section~\ref{subsec:mtd_algorithm}.
\subsection{Theoretical Analysis}
\label{subsec:mtd_theorem}
We start by analyzing the suboptimality gap between the optimal policy and a policy retrieved by zero-shot inference.  Let $r$ and $\hat{r}$ denote the original and reconstructed reward functions.  $\pi_r$ and $\pi_{\hat{r}}$ are the optimal policies for $r$ and $\hat{r}$, respectively.  The suboptimality gap between these two policies is defined as $\mathrm{SubOpt}(r, \hat{r}) \coloneq V_r^{\pi_r} - V_r^{\pi_{\hat{r}}}$. Next, we show that the upper bound on the suboptimality gap depends on both the reward reconstruction errors and the difference between the successor measures of $\pi_r$ and $\pi_{\hat{r}}$.

\begin{theorem}[Suboptimality gap]
\label{thm:main}
Let $\pi_r$ and $\pi_{\hat{r}}$ be the optimal policies for rewards $r$ and $\hat{r}$, respectively. Define the policy-averaged successor measure as $\bar{M}^{\pi}(s^+ \mid s) \coloneq \mathbb{E}_{a \sim \pi(\cdot \mid s)}[M^{\pi}(s^+ \mid s,a)]$. We define $\Delta M(s^+ \mid s)\coloneq\bar{M}^{\pi_{\hat{r}}}(s^+ \mid s) - \bar{M}^{\pi_r}(s^+ \mid s)$ as the residual between the two successor measures, and $\Delta r(s) \coloneq \hat{r}(s) - r(s)$ as the residual between the two rewards. Then, the suboptimality gap $\mathrm{SubOpt}(r, \hat{r})$ of $\hat{r}$ has an upper bound: 
\[
  \mathrm{SubOpt}(r, \hat{r})
  \;\le\;
  \lVert \Delta M \rVert_{1} \;\cdot\; \lVert \Delta r \rVert_{\infty} .
\]
\end{theorem}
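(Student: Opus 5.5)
The plan is to write both values in terms of the policy-averaged successor measure, exploit the optimality of $\pi_{\hat r}$ for $\hat r$ to insert a nonnegative correction term, and finish with Hölder's inequality. From the preliminaries, $V_r^\pi=\mathbb{E}_{s_0\sim\rho}\big[\sum_{s^+}\bar M^{\pi}(s^+\mid s_0)\,r(s^+)\big]$, since averaging $M^\pi(s^+\mid s_0,a_0)$ over $a_0\sim\pi(\cdot\mid s_0)$ gives $\bar M^\pi$. Write $\langle \mu, f\rangle \coloneq \mathbb{E}_{s\sim\rho}\big[\sum_{s^+}\mu(s^+\mid s) f(s^+)\big]$. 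Then
\[
\mathrm{SubOpt}(r,\hat r)=\langle \bar M^{\pi_r}-\bar M^{\pi_{\hat r}},\, r\rangle=-\langle \Delta M, r\rangle .
\]

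Next I would use that $\pi_{\hat r}$ is optimal for $\hat r$. Then $V_{\hat r}^{\pi_{\hat r}}\ge V_{\hat r}^{\pi_r}$, i.e.\ $\langle \Delta M,\hat r\rangle\ge 0$. Adding this nonnegative quantity to the expression above gives
\[
\mathrm{SubOpt}(r,\hat r)\le \langle \Delta M,\hat r\rangle-\langle \Delta M, r\rangle=\langle \Delta M,\Delta r\rangle .
\]
This is the key step. The suboptimality is controlled by the interaction of the two residuals, not by either residual alone. It uses only optimality with respect to the initial distribution $\rho$. If one wants a statewise version, optimal policies dominate pointwise, so the same inequality holds for every $s$ before taking expectations.

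Finally I would apply Hölder's inequality in $s^+$ for each fixed $s$:
\[
\Big|\sum_{s^+}\Delta M(s^+\mid s)\,\Delta r(s^+)\Big|\le \Big(\sum_{s^+}|\Delta M(s^+\mid s)|\Big)\,\lVert\Delta r\rVert_\infty .
\]
Taking $\mathbb{E}_{s\sim\rho}$ then gives the bound with $\lVert\Delta M\rVert_1\coloneq\mathbb{E}_{s\sim\rho}\sum_{s^+}|\Delta M(s^+\mid s)|$. The bound also holds, as a weaker statement, if $\lVert\Delta M\rVert_1$ is read as a supremum over $s$ of the same sum. For continuous $\mathcal S$, sums become integrals against measures and $\lVert\cdot\rVert_1$ becomes total variation; boundedness of $r$ and $\hat r$ together with $\sum_{s^+}\bar M^\pi(s^+\mid s)=1/(1-\gamma)$ make every term finite.

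The only delicate point is the norm convention: $\lVert\Delta M\rVert_1$ must be interpreted (average over $\rho$ or supremum over $s$) consistently with how $V$ is defined. The actual inequality chain is short: value decomposition, adding the optimality slack, then Hölder.
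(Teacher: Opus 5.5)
Your proposal is correct and follows essentially the same route as the paper. The paper likewise writes $\mathrm{SubOpt}(r,\hat r)=-\langle \Delta M, r\rangle_\rho$, drops the nonnegative term $\langle \Delta M,\hat r\rangle_\rho = V_{\hat r}^{\pi_{\hat r}}-V_{\hat r}^{\pi_r}\ge 0$ to get $\langle\Delta M,\Delta r\rangle_\rho$, and applies H\"older statewise. The one difference is the norm: the paper uses $\lVert K\rVert_1=\sup_s\sum_{s^+}|K(s^+\mid s)|$, whereas your $\rho$-averaged norm gives a slightly sharper bound that immediately implies the stated one.
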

 
The proof is in Appendix~\ref{app:proof}.  Theorem~\ref{thm:main} reveals two conditions under which $\pi_{\hat{r}}$ achieves the same value as $\pi_r$ under $r$. The first occurs when the reconstructed reward function $\hat{r}$ is the same as the original reward function $r$, and the second holds when $\pi_{\hat{r}}$'s policy-averaged successor measure matches that of $\pi_r$, even if the reconstructed rewards are incorrect. Notably, OLS pursues only the first condition while ignoring the second one.  Because OLS in practice cannot reconstruct exact rewards, the residual of rewards can change the zero-shot policy's behavior and enlarge the residual of successor measures.  The policy can therefore be suboptimal.  In contrast, our key insight is to consider these conditions jointly, seeking the task vector that minimizes both the residual of rewards and successor measures.

\subsection{Practical algorithm}
\label{subsec:mtd_algorithm}
Devising an objective that explicitly minimizes the residual of successor measures remains a substantial challenge, as the optimal policy's successor measure is typically unknown at test time.  We therefore investigate a surrogate objective based on inductive biases.  We observe that high-density regions of a successor measure often correspond to high-reward states, while lowering (raising) rewards at high-(low-) reward states may significantly change the policy's behavior and also its successor measure.  Based on these observations, we conjecture that preserving both high-value rewards and relative ranking of rewards is an effective and efficient objective to achieve the optimal successor measure.  This motivates a soft-margin contrastive loss.

We propose an iterative optimization algorithm.  Starting with OLS to find a task vector, the algorithm optimizes the vector via gradient descent, using a soft-margin contrastive loss and a trust-region loss.  The former aims to reduce the residual of successor measures, and the latter constrains the task vector to stay close to OLS's solution, which minimizes the total reward reconstruction errors. We summarize the algorithm in Algorithm~\ref{alg:bls}.
 
\paragraph{Soft-margin contrastive loss.}
We separate all states into two sets: high-reward states $G$ and low-reward states $N$, with a threshold $\tau$.  The core idea is to enlarge the margins of reconstructed rewards between these two sets.  However, a hard-margin contrastive loss $\mathcal{L}(z) = \max_{s \in N} \phi(s)^\top z - \min_{s \in G} \phi(s)^\top z$ is limited.  It considers only one margin between the lowest reconstructed reward from $G$ and the highest from $N$, ignoring other violating margins.  Moreover, the loss contains non-smooth operations and is therefore less suitable for gradient-based optimization. To address this, we propose a soft-margin contrastive loss:
\begin{align}
    \mathcal{L}_{sm}(z) = \log \sum_{s \in G} \exp\!\big(\!- \phi(s)^{\top} z\big) +  \log \sum_{s \in N} \exp\!\big( \phi(s)^{\top} z\big).
    \label{eq:softmargin_loss}
\end{align}
Intuitively, the first term of $\mathcal{L}_{sm}$ pushes up the bottom reconstructed rewards from $G$, while the second term drives down the top ones from $N$. The relative ranking among rewards is encouraged.

\paragraph{Trust-region loss.}
The soft-margin contrastive loss alone does not explicitly control the reward reconstruction error. We additionally introduce a trust-region loss: it regularizes the task vector $z$ to stay close to OLS's solution, which minimizes the total reconstruction errors.  The trust-region loss is written as $\mathcal{R}(z)=\mathbb{E}_s \left[ \big\lVert \phi(s)^\top z - \phi(s)^\top \hat{z}_{\mathrm{OLS}} \big\rVert_2^2 \right]$.  Notably, this loss penalizes the discrepancy between rewards reconstructed from $z$ and $ \hat{z}_{\mathrm{OLS}}$, rather than the difference between the task vectors themselves.  This design is better aligned with the objective of reducing the residual of rewards.

Finally, the total loss combines the soft-margin contrastive loss and the trust-region loss: $\mathcal{L}(z)=\mathcal{L}_{sm}(z) + \lambda \mathcal{R}(z)$ with a coefficient $\lambda$ balancing these two losses.  In our experiments, we show that the choice of $\lambda$ correlates with the type of reward function: the trust-region loss (or OLS) benefits dense rewards, while the soft-margin contrastive loss is more helpful to sparse rewards. We further analyze the effect of $\lambda$ in Section~\ref{subsec:ablation}.

\begin{algorithm}[hb]
\small
\caption{BLS}
\label{alg:bls}
\begin{algorithmic}[1]
\Require a reward function $r$, a state feature map $\phi$, threshold $\tau$, number of iterations $T$, coefficient $\lambda$, and learning rate $\alpha$.
\Ensure $z$
\State $G \gets \{s : r(s) \geq \tau\}$
\State $N \gets \{s : r(s) < \tau\}$
\State $\hat{z}_{\mathrm{OLS}} \gets \arg\min_{z} \mathbb{E}_{s} \left[ \lVert \phi(s)^\top z - r(s) \rVert_2^2 \right]$
\State $z \gets \hat{z}_{\mathrm{OLS}}$
\For{$t = 1 \ldots T$}
    \State $\mathcal{L}_{sm}(z) \gets
    \log \sum_{s \in G} \exp\left(-\phi(s)^\top z\right)
    + \log \sum_{s \in N} \exp\left(\phi(s)^\top z\right)$
    \State $\mathcal{R}(z) \gets
    \mathbb{E}_{s} \left[
    \left\lVert
    \phi(s)^\top z -
    \phi(s)^\top \hat{z}_{\mathrm{OLS}}
    \right\rVert_2^2
    \right]$
    \State $z \gets z - \alpha \nabla_z
    \left(
    \mathcal{L}_{sm}(z) + \lambda \mathcal{R}(z)
    \right)$
    \State $z \gets z / \lVert z \rVert_2$
\EndFor
\State \Return $z$
\end{algorithmic}
\end{algorithm}

\begin{table}[t]
\caption{Zero-shot success rates (\%) on OGBench across five feature representations under proprioceptive and pixel-based observations. BLS improves the average performance of all representations.}
\label{tab:ogbench}
\footnotesize
\setlength{\tabcolsep}{2.5pt}
\renewcommand{\arraystretch}{1.25}

\resizebox{5.5in}{!}{%
\begin{tabular}{@{} l *{5}{cc} @{}}
\toprule
 & \multicolumn{2}{c}{Laplacian}
 & \multicolumn{2}{c}{ICVF}
 & \multicolumn{2}{c}{HILP}
 & \multicolumn{2}{c}{FB}
 & \multicolumn{2}{c}{TD-JEPA} \\
\cmidrule(lr){2-3} \cmidrule(lr){4-5} \cmidrule(lr){6-7} \cmidrule(lr){8-9} \cmidrule(lr){10-11}
 & {OLS} & {BLS}
 & {OLS} & {BLS}
 & {OLS} & {BLS}
 & {OLS} & {BLS}
 & {OLS} & {BLS} \\
\midrule
antmaze-mn & 48.1\pmstd{2.9} & \textbf{67.3}\pmstd{8.1} & 66.3\pmstd{1.2} & \textbf{69.2}\pmstd{6.7} & 74.3\pmstd{1.4} & \textbf{87.7}\pmstd{1.5} & 77.5\pmstd{2.6} & \textbf{88.8}\pmstd{4.2} & 63.1\pmstd{0.8} & \textbf{83.2}\pmstd{7.7} \\
antmaze-ln & 7.6\pmstd{1.8} & \textbf{10.9}\pmstd{2.9} & 56.0\pmstd{4.1} & \textbf{65.5}\pmstd{4.9} & 49.2\pmstd{2.1} & \textbf{63.7}\pmstd{1.8} & 52.4\pmstd{4.0} & \textbf{63.5}\pmstd{9.2} & 50.7\pmstd{2.4} & \textbf{64.3}\pmstd{7.0} \\
antmaze-ms & 15.1\pmstd{7.8} & \textbf{34.7}\pmstd{3.4} & 28.8\pmstd{5.6} & \textbf{36.5}\pmstd{9.1} & 26.3\pmstd{9.1} & \textbf{31.3}\pmstd{2.3} & 69.9\pmstd{1.7} & \textbf{79.6}\pmstd{2.1} & 54.5\pmstd{4.8} & \textbf{59.5}\pmstd{2.4} \\
antmaze-ls & 15.2\pmstd{3.9} & \textbf{15.3}\pmstd{0.6} & 5.5\pmstd{0.8} & 4.0\pmstd{0.7} & 11.9\pmstd{1.2} & \textbf{13.5}\pmstd{0.8} & 51.5\pmstd{2.9} & 45.7\pmstd{4.6} & 39.2\pmstd{5.9} & \textbf{45.1}\pmstd{5.9} \\
antmaze-me & 3.7\pmstd{3.3} & \textbf{7.6}\pmstd{1.7} & 0.3\pmstd{0.5} & 0.0\pmstd{0.0} & 2.4\pmstd{2.6} & \textbf{6.7}\pmstd{1.5} & 50.3\pmstd{2.0} & 49.2\pmstd{4.9} & 15.1\pmstd{5.5} & \textbf{29.2}\pmstd{7.0} \\
cube-single & 19.9\pmstd{1.3} & 18.3\pmstd{4.6} & 19.5\pmstd{1.2} & 13.3\pmstd{0.5} & 84.5\pmstd{5.7} & 82.0\pmstd{6.7} & 50.4\pmstd{0.8} & \textbf{54.0}\pmstd{1.8} & 20.3\pmstd{2.9} & \textbf{55.9}\pmstd{3.0} \\
cube-double & 1.3\pmstd{0.6} & \textbf{2.0}\pmstd{1.8} & 3.1\pmstd{1.5} & \textbf{3.5}\pmstd{0.8} & 2.8\pmstd{3.1} & \textbf{25.6}\pmstd{2.1} & 1.9\pmstd{1.0} & \textbf{3.1}\pmstd{1.2} & 1.7\pmstd{0.5} & \textbf{2.1}\pmstd{1.2} \\
scene & 4.3\pmstd{1.4} & 4.0\pmstd{1.7} & 39.3\pmstd{3.2} & \textbf{47.7}\pmstd{3.7} & 26.3\pmstd{2.7} & \textbf{36.8}\pmstd{2.2} & 7.6\pmstd{0.7} & \textbf{9.6}\pmstd{0.4} & 37.9\pmstd{3.2} & \textbf{54.0}\pmstd{6.8} \\
puzzle-3$\times$3 & 1.5\pmstd{0.6} & \textbf{1.6}\pmstd{0.8} & 13.6\pmstd{0.7} & \textbf{15.3}\pmstd{1.4} & 2.4\pmstd{0.0} & \textbf{4.5}\pmstd{1.5} & 5.7\pmstd{1.2} & \textbf{9.6}\pmstd{1.8} & 11.2\pmstd{0.4} & \textbf{14.1}\pmstd{2.1} \\
\midrule[\heavyrulewidth]
Average & 13.0\pmstd{1.1} & \textbf{18.0}\pmstd{1.2} & 25.8\pmstd{0.9} & \textbf{28.3}\pmstd{1.4} & 31.1\pmstd{1.4} & \textbf{39.1}\pmstd{0.9} & 40.8\pmstd{0.7} & \textbf{44.8}\pmstd{1.4} & 32.6\pmstd{1.2} & \textbf{45.3}\pmstd{1.8} \\
\midrule
vis-antmaze-mn & 96.7\pmstd{0.8} & 95.5\pmstd{3.5} & 85.9\pmstd{1.5} & \textbf{89.3}\pmstd{1.6} & 82.3\pmstd{3.7} & \textbf{93.3}\pmstd{0.6} & 96.7\pmstd{0.8} & 95.3\pmstd{1.3} & 75.5\pmstd{3.8} & \textbf{88.8}\pmstd{2.4} \\
vis-antmaze-ln & 14.7\pmstd{5.7} & \textbf{20.7}\pmstd{1.2} & 45.6\pmstd{2.0} & 42.0\pmstd{4.2} & 65.9\pmstd{3.5} & \textbf{79.2}\pmstd{5.6} & 80.0\pmstd{2.5} & 77.9\pmstd{1.6} & 61.6\pmstd{2.6} & \textbf{66.9}\pmstd{6.1} \\
vis-antmaze-ms & 46.5\pmstd{1.5} & \textbf{66.3}\pmstd{1.2} & 31.5\pmstd{0.8} & \textbf{32.0}\pmstd{5.6} & 62.9\pmstd{2.8} & \textbf{63.5}\pmstd{1.7} & 80.1\pmstd{0.8} & 80.0\pmstd{1.1} & 74.9\pmstd{2.4} & \textbf{76.1}\pmstd{5.5} \\
vis-antmaze-ls & 1.5\pmstd{1.3} & \textbf{2.3}\pmstd{2.2} & 6.9\pmstd{3.8} & \textbf{7.3}\pmstd{1.1} & 9.7\pmstd{4.8} & \textbf{16.4}\pmstd{5.6} & 10.5\pmstd{4.4} & 10.0\pmstd{6.6} & 31.3\pmstd{2.1} & \textbf{34.1}\pmstd{0.8} \\
vis-antmaze-me & 0.0\pmstd{0.0} & 0.0\pmstd{0.0} & 0.0\pmstd{0.0} & 0.0\pmstd{0.0} & 0.0\pmstd{0.0} & 0.0\pmstd{0.0} & 0.1\pmstd{0.2} & 0.0\pmstd{0.0} & 0.0\pmstd{0.0} & 0.0\pmstd{0.0} \\
vis-cube-single & 45.7\pmstd{9.8} & \textbf{50.9}\pmstd{2.1} & 35.9\pmstd{3.6} & \textbf{36.0}\pmstd{3.7} & 66.3\pmstd{2.0} & \textbf{68.7}\pmstd{3.0} & 47.9\pmstd{2.7} & \textbf{56.7}\pmstd{4.6} & 47.7\pmstd{3.6} & \textbf{52.4}\pmstd{4.3} \\
vis-cube-double & 0.1\pmstd{0.2} & \textbf{3.3}\pmstd{5.8} & 0.0\pmstd{0.0} & 0.0\pmstd{0.0} & 2.5\pmstd{0.6} & 2.5\pmstd{1.7} & 0.9\pmstd{0.5} & 0.5\pmstd{0.5} & 0.5\pmstd{0.5} & \textbf{3.5}\pmstd{1.6} \\
vis-scene & 2.8\pmstd{0.7} & \textbf{16.3}\pmstd{9.9} & 2.8\pmstd{1.4} & \textbf{3.1}\pmstd{0.5} & 8.7\pmstd{1.7} & \textbf{28.5}\pmstd{4.3} & 14.5\pmstd{2.2} & \textbf{25.3}\pmstd{3.8} & 28.9\pmstd{2.7} & \textbf{34.9}\pmstd{2.8} \\
vis-puzzle-3$\times$3 & 12.4\pmstd{3.0} & \textbf{17.1}\pmstd{1.5} & 0.0\pmstd{0.0} & 0.0\pmstd{0.0} & 2.4\pmstd{0.4} & \textbf{2.5}\pmstd{1.2} & 7.2\pmstd{1.1} & 6.3\pmstd{2.3} & 0.8\pmstd{0.4} & \textbf{1.6}\pmstd{1.4} \\
\midrule[\heavyrulewidth]
Average & 24.5\pmstd{1.3} & \textbf{30.3}\pmstd{1.4} & 23.2\pmstd{0.7} & \textbf{23.3}\pmstd{0.9} & 33.4\pmstd{0.9} & \textbf{39.4}\pmstd{1.1} & 37.5\pmstd{0.7} & \textbf{39.1}\pmstd{1.1} & 35.7\pmstd{0.8} & \textbf{39.8}\pmstd{1.1} \\
\bottomrule
\end{tabular}
}
\end{table}
\vspace{5pt}
\section{Experiments}
\label{sec:experiments}
We systematically evaluate \method{} on established RL simulation benchmarks across a wide range of tasks, including 5 navigation and 4 manipulation tasks on OGBench~\citep{ICLR2025_ecd92623}, 20 continuous-control tasks across 4 domains on ExoRL/DMC~\citep{tassa2018deepmind, yarats2022don}, and 45 motion-tracking tasks with a humanoid robot on HumEnv~\citep{tirinzoni2025zero}. We evaluate \method{} along five main aspects. First, we examine its robustness to different feature representations. Second, we compare \method{} against existing zero-shot inference methods, including two offline approaches (OLS and ZOL~\citep{asadulaev2026zero}) and two online adaptation methods (ReLA and LoLA~\citep{sikchi2025fast}). Third, we compare \method{} with goal-conditioned RL. Fourth, we evaluate \method{} separately on dense- and sparse-reward tasks. Finally, we conduct ablations and further analyses to examine its design choices, robustness, and computational efficiency. Unless otherwise stated, we report the mean and standard deviation over 3 random seeds. Additional experimental results and analyses are provided in Appendix~\ref{app:additional_results}.

\subsection{Is \method{} robust to different choices of feature representations?}

We evaluate \method{} on OGBench across 5 feature representations: Laplacian~\citep{wu2018laplacian}, ICVF~\citep{ghosh2023reinforcement}, HILP~\citep{park2024foundation}, FB~\citep{touati2021learning}, and TD-JEPA~\citep{bagatella2025td}. These representations are pretrained with different objectives. OGBench contains 9 sparse-reward navigation and manipulation tasks. We consider both proprioceptive and pixel-based observations. For each representation and observation modality, we use a fixed $\lambda$ across all tasks; full configuration details are provided in Appendix~\ref{app:implementation}.

As shown in Table~\ref{tab:ogbench}, \method{} improves the average performance of all feature representations under both observation modalities. Under proprioceptive observations, the gains are particularly large for HILP ($31.1\% \rightarrow 39.1\%$) and TD-JEPA ($32.6\% \rightarrow 45.3\%$). The same overall trend holds for pixel-based observations, showing that \method{} is robust across both feature representations and observation settings.

\subsection{How does \method{} compare with existing task inference methods?}
\begin{table*}[t]
\centering

\begin{minipage}[t]{0.485\textwidth}
\centering
\captionof{table}{Average results over FB on the OGBench comparing BLS against offline baseline (ZOL) from different task anchor (OLS/ZOL).}
\label{tab:baselines_offline}

\footnotesize
\setlength{\tabcolsep}{2.8pt}
\renewcommand{\arraystretch}{1.12}

\begin{tabular*}{\linewidth}{@{\extracolsep{\fill}}lcccc@{}}
\toprule
& OLS
& ZOL
& BLS\at{OLS}
& BLS\at{ZOL} \\
\midrule

antmaze-mn
& 77.5\pmstd{2.6}
& 84.5\pmstd{1.6}
& \textbf{88.8}\pmstd{4.2}
& 84.4\pmstd{4.5} \\

antmaze-ln
& 52.4\pmstd{4.0}
& 41.5\pmstd{0.8}
& 63.5\pmstd{9.2}
& \textbf{68.7}\pmstd{2.4} \\

antmaze-ms
& 69.9\pmstd{1.7}
& 51.9\pmstd{1.2}
& \textbf{79.6}\pmstd{2.1}
& 68.1\pmstd{3.6} \\

antmaze-ls
& \textbf{51.5}\pmstd{2.9}
& 37.9\pmstd{5.1}
& 45.7\pmstd{4.6}
& 36.0\pmstd{4.6} \\

antmaze-me
& \textbf{50.3}\pmstd{2.0}
& 49.7\pmstd{3.1}
& 49.2\pmstd{4.9}
& 48.8\pmstd{1.4} \\

cube-single
& 50.4\pmstd{0.8}
& 46.4\pmstd{6.8}
& \textbf{54.0}\pmstd{1.8}
& 43.7\pmstd{6.5} \\

cube-double
& 1.9\pmstd{1.0}
& 2.8\pmstd{0.0}
& \textbf{3.1}\pmstd{1.2}
& 1.7\pmstd{1.0} \\

scene
& 7.6\pmstd{0.7}
& 33.5\pmstd{7.9}
& 9.6\pmstd{0.4}
& \textbf{35.3}\pmstd{8.4} \\

puzzle-3$\times$3
& 5.7\pmstd{1.2}
& \textbf{13.7}\pmstd{1.2}
& 9.6\pmstd{1.8}
& 11.7\pmstd{1.6} \\

\midrule[\heavyrulewidth]

Average
& 40.8\pmstd{0.7}
& 40.2\pmstd{1.4}
& \textbf{44.8}\pmstd{1.4}
& 44.3\pmstd{1.5} \\

\bottomrule
\end{tabular*}
\end{minipage}
\hfill
\begin{minipage}[t]{0.485\textwidth}
\centering
\captionof{table}{Average results over five representations on OGBench comparing offline OLS / BLS against online ReLA/LoLA.}
\label{tab:baselines_online}

\footnotesize
\setlength{\tabcolsep}{2.8pt}
\renewcommand{\arraystretch}{1.12}

\begin{tabular*}{\linewidth}{@{\extracolsep{\fill}}lcccc@{}}
\toprule
& OLS
& ReLA
& LoLA
& BLS \\
\midrule

antmaze-mn
& 65.9\pmstd{0.9}
& 49.7\pmstd{2.1}
& 64.9\pmstd{2.4}
& \textbf{79.2}\pmstd{2.8} \\

antmaze-ln
& 43.2\pmstd{1.4}
& 19.4\pmstd{2.5}
& 36.4\pmstd{2.6}
& \textbf{53.6}\pmstd{2.6} \\

antmaze-ms
& 38.9\pmstd{2.8}
& 33.3\pmstd{2.8}
& 39.5\pmstd{2.8}
& \textbf{48.3}\pmstd{2.1} \\

antmaze-ls
& 24.7\pmstd{1.6}
& 8.3\pmstd{1.7}
& 20.5\pmstd{3.0}
& \textbf{24.7}\pmstd{1.5} \\

antmaze-me
& 14.4\pmstd{1.4}
& 10.7\pmstd{1.5}
& 16.2\pmstd{2.3}
& \textbf{18.5}\pmstd{1.8} \\

cube-single
& 38.9\pmstd{1.3}
& 38.5\pmstd{1.2}
& 40.6\pmstd{1.5}
& \textbf{44.7}\pmstd{1.8} \\

cube-double
& 2.2\pmstd{0.7}
& 2.7\pmstd{0.5}
& 2.3\pmstd{0.6}
& \textbf{7.3}\pmstd{0.7} \\

scene
& 23.1\pmstd{1.1}
& 16.1\pmstd{2.8}
& 21.7\pmstd{2.3}
& \textbf{30.4}\pmstd{1.6} \\

puzzle-3$\times$3
& 6.9\pmstd{0.3}
& 7.9\pmstd{1.1}
& 7.0\pmstd{1.2}
& \textbf{9.0}\pmstd{0.7} \\

\midrule[\heavyrulewidth]

Average
& 28.7\pmstd{0.5}
& 20.8\pmstd{0.7}
& 27.7\pmstd{0.7}
& \textbf{35.1}\pmstd{0.6} \\

\bottomrule
\end{tabular*}
\end{minipage}

\end{table*}

We first compare \method{} against two offline zero-shot inference methods, OLS and ZOL, on OGBench. Since ZOL is specific to the FB representation~\citep{touati2021learning}, we use FB for all methods in this comparison. The trust-region loss of \method{} can use either OLS or ZOL as its anchor. We denote these variants as \method{}\at{OLS} and \method{}\at{ZOL}, respectively. As shown in Table~\ref{tab:baselines_offline}, both variants improve their corresponding anchors on average. \method{}\at{OLS} increases the average success rate from $40.8\%$ to $44.8\%$, while \method{}\at{ZOL} improves ZOL from $40.2\%$ to $44.3\%$, corresponding to gains of $4.0$ and $4.1$ percentage points.

We next compare our offline inference method against the online adaptation methods ReLA and LoLA. ReLA and LoLA use $20$ reward-labeled interaction episodes per task, whereas \method{} task inference requires no task-specific interaction. We evaluate all methods over 5 feature representations: Laplacian, ICVF, HILP, FB, and TD-JEPA, and report the average performance across them; per-representation results are provided in Appendix~\ref{app:ogbench_baselines}. As shown in Table~\ref{tab:baselines_online}, \method{} achieves an average success rate of $35.1\%$, outperforming ReLA ($20.8\%$) and LoLA ($27.7\%$) by $14.3$ and $7.4$ percentage points, respectively.

\subsection{How does \method{} compare with goal-conditioned RL?}
\begin{figure}[tb]
\centering
    \includegraphics[width=\textwidth]{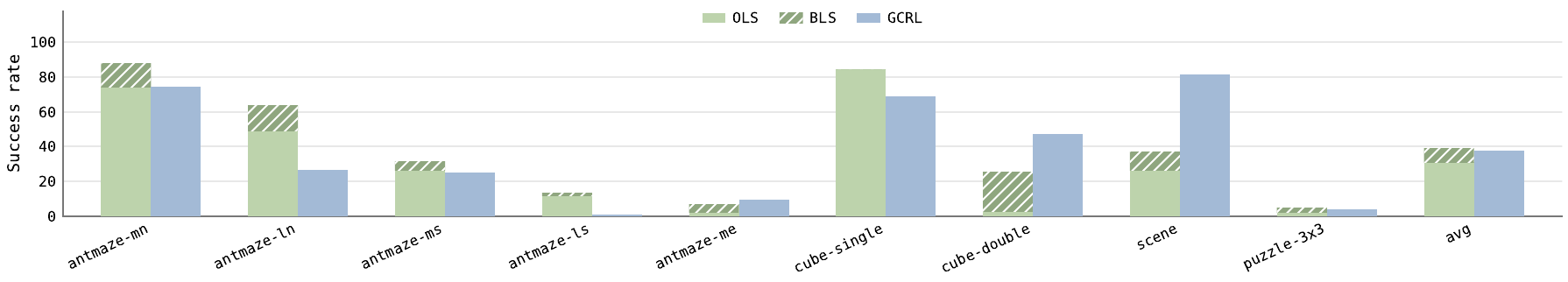}
   \caption{Comparison of OLS, \method{}, and GCRL on OGBench using HILP. The hatched segments indicate the improvement of \method{} over OLS. \method{} achieves performance comparable to GCRL on average, while their relative performance varies across tasks.}
   \label{fig:gcrl}

\end{figure}
We report the success rate of \method{} against goal-conditioned RL (GCRL) on the proprioceptive setting of OGBench. For a fair comparison, we use HILP for both zero-shot RL and GCRL, as HILP supports both formulations within the same representation framework. GCRL provides a strong reference point for goal-reaching tasks because it is trained directly on the goal-conditioned objective and receives the goal explicitly, whereas zero-shot RL must infer a task vector from the reward function. As shown in Figure~\ref{fig:gcrl}, \method{} achieves an average success rate of $39.1\%$, comparable to GCRL at $37.7\%$, while OLS reaches only $31.1\%$. The relative performance varies across tasks: \method{} performs particularly well on several navigation tasks, whereas GCRL is stronger on some manipulation tasks such as \textit{cube-double} and \textit{scene}. These results show that improved task inference can substantially close the performance gap between zero-shot RL and direct goal-conditioned control.

\subsection{How does \method{} perform on dense-reward tasks?}
\label{sec:dense}
\begin{figure}[t]
    \centering
    \includegraphics[width=\textwidth]{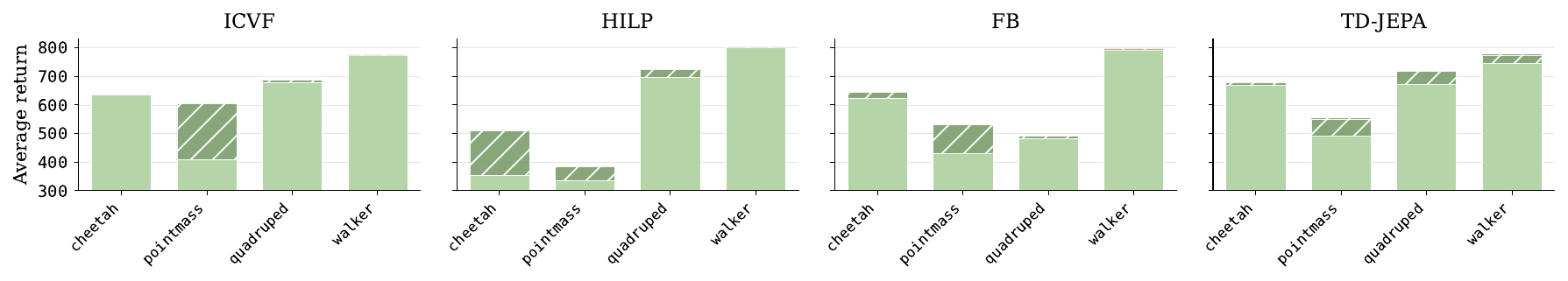}\\[4pt]
    \includegraphics[width=\textwidth]{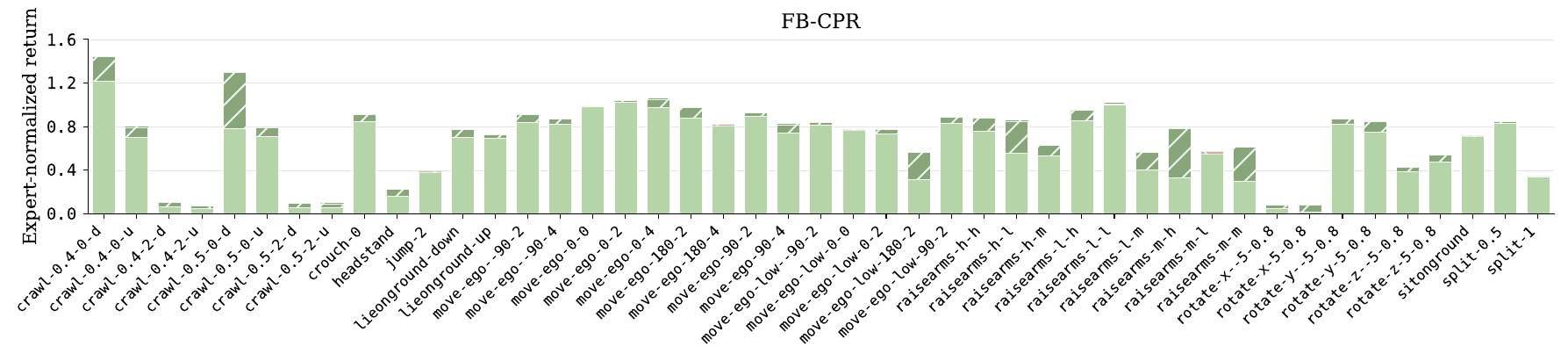}
    \caption{Dense-reward results on DMC (top) and HumEnv (bottom). Solid bars show the return of the OLS policy, and hatched segments indicate the change after applying BLS. DMC reports average return across tasks for each domain and feature representation, while HumEnv reports expert-normalized return on 45 motion-tracking tasks, with FB-CPR feature representations.}
    \label{fig:dense_bar}
\end{figure}

We next investigate whether \method{} remains effective on dense-reward tasks. We evaluate on DMC with 20 tasks across 4 domains and on HumEnv with 45 humanoid control tasks. Since dense rewards do not naturally define a sparse goal set, we construct $G$ from the high-reward region of each benchmark. We define $G$ using the top $5\%$ high-reward region and let the remaining states form $N$. Full configuration and evaluation details are provided in Appendix~\ref{app:implementation}.

The results are summarized in Figure~\ref{fig:dense_bar}. Although the gains are generally smaller than on sparse-reward tasks, \method{} remains effective on both benchmarks. On DMC, after averaging over the four feature representations (ICVF, HILP, FB, and TD-JEPA), \method{} improves 18 of 20 tasks. On HumEnv, using the FB-CPR representation~\citep{tirinzoni2025zero}, it improves 42 of 45 tasks, with the largest gains on \textit{crawl} and \textit{raisearms}. Overall, these results show that \method{} also generalizes effectively to dense-reward tasks across different control domains.

\subsection{Ablations and Further Analysis}
\label{subsec:ablation}
We further analyze \method{} in terms of (1) sensitivity to $\lambda$, (2) robustness to the construction of the high- and low-reward sets $G$ and $N$, (3) computational overhead, and (4) robustness to the feature dimension $d$.

\begin{figure}[tb]
\centering
   \includegraphics[width=\textwidth]{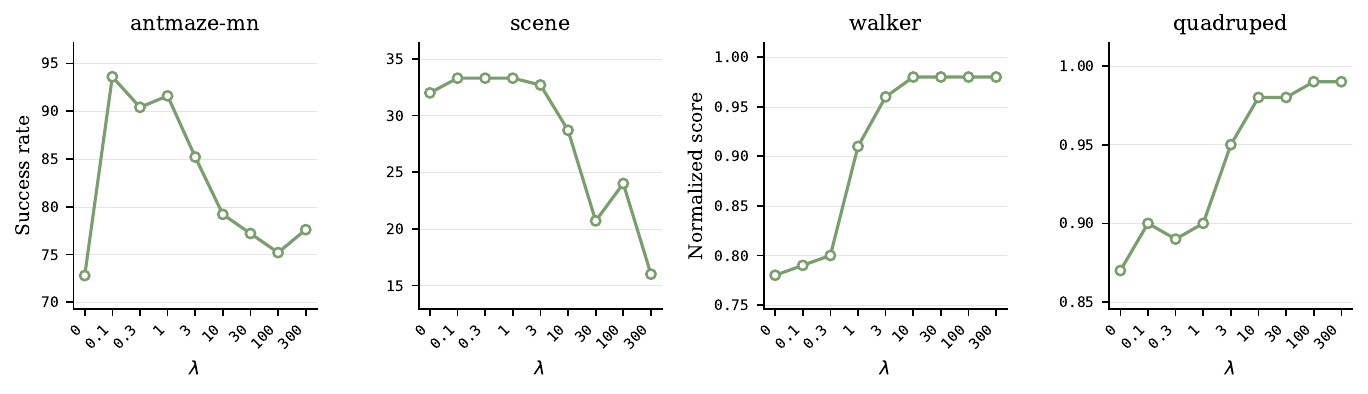}
   \caption{Sensitivity to $\lambda$ with the HILP encoder: sparse rewards (left two) prefer a small $\lambda$, and dense rewards (right two) prefer a large one.}
   \label{fig:lambda}
\end{figure}
\noindent\textbf{The sensitivity to the choice of $\lambda$.}
We observed that small values of $\lambda$ are suited for sparse-reward tasks, and vice versa.  We conducted experiments on two sparse-reward tasks (\textit{antmaze} and \textit{scene}) from OGBench, and all tasks with the \textit{walker} and \textit{quadruped} embodiments from DMC. We report the performance obtained with each $\lambda$ selected from the range $\{0, 0.1, 0.3, 1, 3, 10, 30, 100, 300\}$.  Results are summarized in Figure~\ref{fig:lambda}.  We show that keeping the task vector close to OLS's solution (larger $\lambda$) is fine for dense-reward tasks, but it soon becomes ineffective when the rewards get sparser.  Enlarging the margins between low- and high-value rewards significantly improves the task performance under the sparse-reward regime.

\begin{figure}[t]
\centering
  \includegraphics[width=0.52\textwidth]{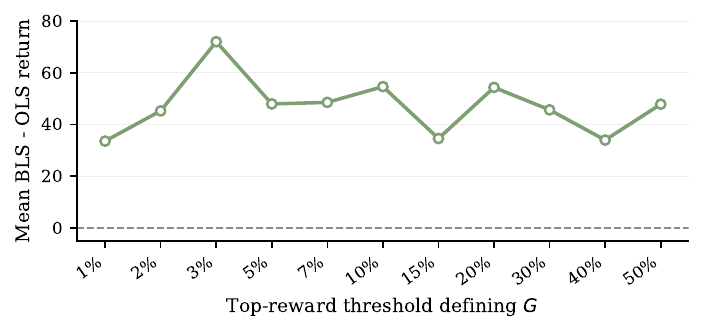}
  \caption{
    Sensitivity to the reward threshold for defining $G$ on DMC with TD-JEPA. We report the paired mean return gain of \method{} over OLS across $20$ tasks as the threshold varies.
  }
  \label{fig:g_threshold_sensitivity}

\end{figure}
\noindent\textbf{Sensitivity to the reward threshold.}
For dense-reward tasks, we define $G$ using a threshold on the reward distribution. We evaluate the sensitivity to this threshold on all $20$ DMC tasks with TD-JEPA, varying the cutoff from the top $1\%$ to the top $50\%$ of reward samples. As shown in Figure~\ref{fig:g_threshold_sensitivity}, \method{} achieves a positive paired mean improvement over OLS across the entire range. For thresholds between the top $3\%$ and top $50\%$, the mean improvement ranges from $34$ to $72$ return points, showing substantial gains across threshold choices. Thus, \method{} does not require careful tuning of the reward threshold used to construct $G$.

\noindent\textbf{The computational efficiency.}
While \method{} adopts gradient-based optimization, it incurs little overhead during inference.  Using an Nvidia RTX 4090 GPU card, optimizing the task vector for $500$ steps costs less than $1$ second on OGBench and $1.4$ seconds on HumEnv.  Notably, the inference procedure is performed once before executing the task.  Therefore, the incurred latency is negligible.
\begin{table}[H]
\centering
\caption{Success rate as the feature dimension $d$ varies.}
\label{tab:dim}
\small
\setlength{\tabcolsep}{6pt}
\renewcommand{\arraystretch}{1.25}
\begin{tabular}{@{} l cccccc @{}}
\toprule
$d$ & 25 & 50 & 100 & 200 & 500 & 1000 \\
\midrule
OLS & 70.9\pmstd{2.1} & 74.3\pmstd{1.4} & 59.2\pmstd{1.7} & 25.3\pmstd{1.5} & 9.2\pmstd{0.4} & 22.0\pmstd{5.7} \\
BLS & \textbf{84.7}\pmstd{2.0} & \textbf{87.7}\pmstd{1.5} & \textbf{87.5}\pmstd{2.0} & \textbf{82.1}\pmstd{9.0} & \textbf{79.6}\pmstd{8.4} & \textbf{82.7}\pmstd{16.7} \\
\bottomrule
\end{tabular}
\end{table}
\noindent\textbf{Robustness to feature dimension.}
A larger feature dimension $d$ increases representation capacity, but does not guarantee a better learned representation. This distinction is important for OLS, whose effectiveness depends on how well the downstream reward can be represented by the learned feature basis. We test this by training HILP on \textit{antmaze-mn} with $d \in \{25, 50, 100, 200, 500, 1000\}$ while keeping the number of reward samples fixed for task inference. As shown in Table~\ref{tab:dim}, increasing $d$ does not make OLS more reliable: its success rate peaks at $74.3\%$ with $d=50$ and then drops sharply to $9.2\%$ at $d=500$. In contrast, \method{} outperforms OLS at every $d$, remaining between $79\%$ and $88\%$ across all dimensions. These results show that increasing feature capacity alone cannot resolve the limitation of OLS, whereas \method{} remains effective even when the learned representation does not yield a reliable OLS solution.

\section{Conclusion}
In this work, we study the limitations of OLS-based task inference in Behavioral Foundation Models, showing that the quality of an inferred task vector depends not only on reward reconstruction error but also on the successor measure induced by the reconstructed reward. We propose \method{}, which refines the task vector with a soft-margin objective while staying close to a given inference anchor, and consistently outperforms existing task inference methods with little additional computation. While these results are encouraging, the soft-margin objective remains an empirical surrogate for successor-measure mismatch, and its weight against the anchor depends on reward sparsity. We hope our results encourage future work to consider the induced policy behavior, rather than reward reconstruction alone, when designing task inference methods for BFMs.

\section*{Acknowledgment}
This work was supported by the AMD University Program AI \& HPC Cluster. We further acknowledge Kuo-Guang Tsai for his technical support on the AMD system cluster infrastructure. This research was also supported by the National Science and Technology Council (NSTC), Taiwan, under Grants 115-2222-E-002-006-, 115-2634-F-002-001-,  115-2218-E-002-026-, 115-2634-F-002-012-, 115-2628-E-A49-001, and 115-2634-F-A49-011.

\section*{AI Use Statement}
We used large language models to assist with checking the proof, implementing and debugging experimental code, interpreting results, and polishing the text. All ideas, methods, and claims originate from the authors, and every AI-assisted proof, code path, and passage was verified by us.

\section*{Reproducibility Statement}
We provide the details necessary to reproduce our results throughout the paper and appendix. The \method{} objective and optimization procedure are specified in Section~\ref{subsec:mtd_algorithm} and Algorithm~\ref{alg:bls}, with complete proofs provided in Appendix~\ref{app:proof}. Appendix~\ref{app:implementation} reports the implementation and evaluation details, including task-inference data, optimization hyperparameters, benchmark-specific configurations, evaluation protocols, and random-seed settings. Additional task-level results and robustness analyses are provided in Appendix~\ref{app:additional_results}.


{
    \small
    \setlength{\bibsep}{2pt}
    \bibliographystyle{ieeenat_fullname}
    \bibliography{bibs/main}
}

\restoreappendixlayout
\appendix
\newpage
\section{Proofs}
\label{app:proof}

This section provides the proof of Theorem~\ref{thm:main}.
Throughout this section, $\mathcal{S}$ and $\mathcal{A}$ are finite and
$\gamma \in (0,1)$. Recall that $\rho$ denotes the initial state distribution.
For a policy $\pi$, we define its policy-averaged successor measure as
$\bar{M}^{\pi}(s^+ \mid s)
= \mathbb{E}_{a \sim \pi(\cdot \mid s)}
[M^{\pi}(s^+ \mid s,a)]$.
For a kernel $K(\cdot \mid s)$ and a function $f$, we write
$\langle K, f \rangle_{\rho}
= \mathbb{E}_{s \sim \rho}
\sum_{s^+} K(s^+ \mid s) f(s^+)$,
and define the mixed norm
$\lVert K \rVert_{1}
= \sup_{s} \sum_{s^+} |K(s^+ \mid s)|$.
Under this notation,
$V_r^\pi = \langle \bar{M}^{\pi}, r \rangle_\rho$.

We restate Theorem~\ref{thm:main} for convenience.

\begin{theorem}[Suboptimality gap]
Let $\pi_r$ and $\pi_{\hat r}$ be the optimal policies for rewards $r$ and
$\hat r$, respectively, and let $\bar{M}^{\pi_r}$ and
$\bar{M}^{\pi_{\hat r}}$ be their corresponding policy-averaged successor
measures. Define
$\Delta M(s^+ \mid s)
\coloneq \bar{M}^{\pi_{\hat r}}(s^+ \mid s)
- \bar{M}^{\pi_r}(s^+ \mid s)$
as the residual between the two successor measures, and
$\Delta r(s) \coloneq\hat r(s) - r(s)$
as the residual between the two rewards. Then,
\[
\mathrm{SubOpt}(r,\hat r)
\le
\lVert \Delta M \rVert_1
\cdot
\lVert \Delta r \rVert_\infty.
\]
\end{theorem}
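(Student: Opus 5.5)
The plan is an add-and-subtract argument that uses the optimality of $\pi_{\hat r}$ for $\hat r$ to insert a nonnegative correction term. Under the notation of this appendix, values are linear in the reward through the policy-averaged successor measure: $V_f^\pi = \langle \bar M^{\pi}, f\rangle_\rho$ for any bounded $f$. This holds because $V_f^\pi$ is defined with $a_0\sim\pi(\cdot\mid s_0)$, so averaging $M^\pi(\cdot\mid s_0,a_0)$ over $a_0$ gives exactly $\bar M^\pi$. I will first record this identity for both $f=r$ and $f=\hat r$, and for both policies $\pi_r$ and $\pi_{\hat r}$.

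The key step is to bound the gap using optimality of $\pi_{\hat r}$ for $\hat r$. Since $\pi_{\hat r}$ maximizes $V_{\hat r}^{\pi}$, in particular
\[
V_{\hat r}^{\pi_{\hat r}} - V_{\hat r}^{\pi_r} \ge 0 .
\]
The optimality needed is only with respect to the $\rho$-averaged value, which is implied by per-state optimality. Adding this nonnegative quantity to $\mathrm{SubOpt}(r,\hat r) = V_r^{\pi_r} - V_r^{\pi_{\hat r}}$ gives
\[
\mathrm{SubOpt}(r,\hat r) \le \langle \bar M^{\pi_r}, r\rangle_\rho - \langle \bar M^{\pi_{\hat r}}, r\rangle_\rho + \langle \bar M^{\pi_{\hat r}}, \hat r\rangle_\rho - \langle \bar M^{\pi_r}, \hat r\rangle_\rho .
\]
By bilinearity of $\langle\cdot,\cdot\rangle_\rho$, the right-hand side regroups as $\langle \bar M^{\pi_{\hat r}} - \bar M^{\pi_r},\, \hat r - r\rangle_\rho = \langle \Delta M, \Delta r\rangle_\rho$. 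This regrouping is the whole content of the bound. Note that optimality of $\pi_r$ for $r$ is not even needed here.

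The final step is a Hölder-type estimate. For each fixed $s$,
\[
\Bigl|\sum_{s^+}\Delta M(s^+\mid s)\,\Delta r(s^+)\Bigr| \le \sum_{s^+}|\Delta M(s^+\mid s)|\,\lVert\Delta r\rVert_\infty \le \lVert \Delta M\rVert_1\,\lVert\Delta r\rVert_\infty ,
\]
using the mixed-norm definition $\lVert K\rVert_1 = \sup_s \sum_{s^+}|K(s^+\mid s)|$. Taking the expectation over $s\sim\rho$ preserves this bound because $\rho$ is a probability distribution, which yields the claim.

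I expect no step to be a real obstacle. The only points needing care are bookkeeping: checking that $V^\pi$ is exactly $\langle \bar M^\pi, \cdot\rangle_\rho$ (the $\sum_t\gamma^t r(s_{t+1})$ indexing matches the definition of $M^\pi$), and noting that all sums are finite because $\mathcal S$ and $\mathcal A$ are finite and $\lVert \bar M^\pi\rVert_1 = 1/(1-\gamma)$. Together these ensure the rearrangements are legitimate.
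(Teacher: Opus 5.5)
Your proposal is correct and follows essentially the same route as the paper: the paper also uses $\langle \Delta M, \hat r\rangle_\rho = V_{\hat r}^{\pi_{\hat r}} - V_{\hat r}^{\pi_r} \ge 0$ to write $\mathrm{SubOpt}(r,\hat r) = \langle \Delta M, \Delta r\rangle_\rho - \langle \Delta M, \hat r\rangle_\rho \le \langle \Delta M, \Delta r\rangle_\rho$, and then applies the same per-state H\"older bound with the mixed norm. Your add-and-subtract presentation is only a rearrangement of that identity, and your remark that optimality of $\pi_r$ is never used holds for the paper's argument as well.
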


\begin{proof}
 We first relate the suboptimality gap to $\Delta M$ and $\Delta r$.  Since $\pi_{\hat r}$ is optimal for $\hat r$,
\begin{align}\langle \Delta M,\,  \hat{r}  \rangle_{\rho}=V_{\hat r}^{\pi_{\hat r}}-
V_{\hat r}^{\pi_r}\ge 0\label{eq:optimal}\end{align}. By Eqn. \ref{eq:optimal} and definition of $\Delta M$ and $\Delta r$, we have
 \begin{align}
    \mathrm{SubOpt}(r, \hat{r})
    &= -\langle \Delta M, \, r\rangle_{\rho}\\
    &=\langle \Delta M, \, \Delta r\rangle_{\rho}-\langle \Delta M,\,  \hat{r}  \rangle_{\rho}\\
    &\le \langle \Delta M, \, \Delta r\rangle_{\rho} \label{eq:drop}
 \end{align}
Applying H\"older's inequality to each $s$ gives
\begin{align}
\left|
\langle \Delta M, \Delta r
\rangle_\rho
\right|
&\le
\mathbb{E}_{s\sim\rho}
\sum_{s^+\in\mathcal S}
|\Delta M(s^+\mid s)|
|\Delta r(s^+)| \\
&\le
\lVert \Delta M\rVert_1
\lVert \Delta r\rVert_\infty .
\label{eq:holder}
\end{align}

Combining Eqn.~\ref{eq:drop} and Eqn.~\ref{eq:holder}, we conclude that
\[
\mathrm{SubOpt}(r,\hat r)
\le
\lVert \Delta M\rVert_1
\lVert \Delta r\rVert_\infty
.
\]
\end{proof}
\begin{table*}[t]
\caption{OGBench success rates (\%) across four feature representations.}
\label{tab:app_ogbench_repr}
\centering
\scriptsize
\renewcommand{\arraystretch}{1.02}

\begin{minipage}[t]{0.485\textwidth}
\centering
\setlength{\tabcolsep}{2pt}

\begin{tabular*}{\linewidth}{
@{\extracolsep{\fill}} l c c c c @{}
}
\toprule
\multicolumn{5}{c}{Laplacian} \\
\midrule
Task & OLS & BLS & ReLA & LoLA \\
\midrule
antmaze-mn
& 48.1\pmstd{2.9}
& \textbf{67.3}\pmstd{8.1}
& 39.6\pmstd{8.0}
& 48.5\pmstd{9.2} \\
antmaze-ln
& 7.6\pmstd{1.8}
& \textbf{10.9}\pmstd{2.9}
& 7.3\pmstd{6.1}
& 5.6\pmstd{1.1} \\
antmaze-ms
& 15.1\pmstd{7.8}
& \textbf{34.7}\pmstd{3.4}
& 12.9\pmstd{8.5}
& 10.8\pmstd{4.2} \\
antmaze-ls
& 15.2\pmstd{3.9}
& \textbf{15.3}\pmstd{0.6}
& 4.9\pmstd{5.8}
& 12.9\pmstd{5.7} \\
antmaze-me
& 3.7\pmstd{3.3}
& \textbf{7.6}\pmstd{1.7}
& 0.0\pmstd{0.0}
& 3.5\pmstd{3.6} \\
cube-single
& 19.9\pmstd{1.3}
& 18.3\pmstd{4.6}
& 18.5\pmstd{3.4}
& \textbf{23.6}\pmstd{1.4} \\
cube-double
& 1.3\pmstd{0.6}
& 2.0\pmstd{1.8}
& \textbf{2.4}\pmstd{0.7}
& 1.9\pmstd{0.8} \\
scene
& 4.3\pmstd{1.4}
& 4.0\pmstd{1.7}
& 4.3\pmstd{1.2}
& \textbf{5.3}\pmstd{0.8} \\
puzzle-3$\times$3
& 1.5\pmstd{0.6}
& 1.6\pmstd{0.8}
& 1.7\pmstd{0.6}
& \textbf{3.5}\pmstd{0.6} \\
\midrule
Average
& 13.0
& \textbf{18.0}
& 10.2
& 12.8 \\
\bottomrule
\end{tabular*}
\end{minipage}
\hfill
\begin{minipage}[t]{0.485\textwidth}
\centering
\setlength{\tabcolsep}{2pt}

\begin{tabular*}{\linewidth}{
@{\extracolsep{\fill}} l c c c c @{}
}
\toprule
\multicolumn{5}{c}{ICVF} \\
\midrule
Task & OLS & BLS & ReLA & LoLA \\
\midrule
antmaze-mn
& 66.3\pmstd{1.2}
& \textbf{69.2}\pmstd{6.7}
& 51.5\pmstd{3.2}
& 53.6\pmstd{2.4} \\
antmaze-ln
& 56.0\pmstd{4.1}
& \textbf{65.5}\pmstd{4.9}
& 25.6\pmstd{10.0}
& 39.5\pmstd{5.4} \\
antmaze-ms
& 28.8\pmstd{5.6}
& \textbf{36.5}\pmstd{9.1}
& 18.3\pmstd{4.0}
& 33.5\pmstd{5.4} \\
antmaze-ls
& 5.5\pmstd{0.8}
& 4.0\pmstd{0.7}
& \textbf{6.8}\pmstd{6.2}
& \textbf{6.8}\pmstd{2.9} \\
antmaze-me
& \textbf{0.3}\pmstd{0.5}
& 0.0\pmstd{0.0}
& 0.0\pmstd{0.0}
& 0.0\pmstd{0.0} \\
cube-single
& \textbf{19.5}\pmstd{1.2}
& 13.3\pmstd{0.5}
& 13.7\pmstd{0.6}
& 19.3\pmstd{2.0} \\
cube-double
& 3.1\pmstd{1.5}
& 3.5\pmstd{0.8}
& 2.8\pmstd{1.1}
& \textbf{3.6}\pmstd{1.6} \\
scene
& 39.3\pmstd{3.2}
& \textbf{47.7}\pmstd{3.7}
& 37.2\pmstd{2.2}
& 37.5\pmstd{5.9} \\
puzzle-3$\times$3
& 13.6\pmstd{0.7}
& \textbf{15.3}\pmstd{1.4}
& 8.7\pmstd{1.6}
& 12.4\pmstd{3.9} \\
\midrule
Average
& 25.8
& \textbf{28.3}
& 18.3
& 22.9 \\
\bottomrule
\end{tabular*}
\end{minipage}

\vspace{8pt}

\begin{minipage}[t]{0.485\textwidth}
\centering
\setlength{\tabcolsep}{2pt}

\begin{tabular*}{\linewidth}{
@{\extracolsep{\fill}} l c c c c @{}
}
\toprule
\multicolumn{5}{c}{HILP} \\
\midrule
Task & OLS & BLS & ReLA & LoLA \\
\midrule
antmaze-mn
& 74.3\pmstd{1.4}
& \textbf{87.7}\pmstd{1.5}
& 65.9\pmstd{2.0}
& 76.7\pmstd{3.0} \\
antmaze-ln
& 49.2\pmstd{2.1}
& \textbf{63.7}\pmstd{1.8}
& 24.5\pmstd{2.9}
& 42.5\pmstd{9.5} \\
antmaze-ms
& 26.3\pmstd{9.1}
& \textbf{31.3}\pmstd{2.3}
& 12.7\pmstd{9.8}
& 23.7\pmstd{11.5} \\
antmaze-ls
& 11.9\pmstd{1.2}
& \textbf{13.5}\pmstd{0.8}
& 0.7\pmstd{0.6}
& 4.9\pmstd{6.9} \\
antmaze-me
& 2.4\pmstd{2.6}
& 6.7\pmstd{1.5}
& \textbf{8.0}\pmstd{7.5}
& 7.3\pmstd{5.3} \\
cube-single
& \textbf{84.5}\pmstd{5.7}
& 82.0\pmstd{6.7}
& \textbf{84.5}\pmstd{3.5}
& 83.9\pmstd{6.0} \\
cube-double
& 2.8\pmstd{3.1}
& \textbf{25.6}\pmstd{2.1}
& 1.3\pmstd{2.0}
& 2.9\pmstd{2.0} \\
scene
& 26.3\pmstd{2.7}
& \textbf{36.8}\pmstd{2.2}
& 12.5\pmstd{10.9}
& 16.9\pmstd{9.4} \\
puzzle-3$\times$3
& 2.4\pmstd{0.0}
& \textbf{4.5}\pmstd{1.5}
& 3.1\pmstd{2.7}
& 3.2\pmstd{0.4} \\
\midrule
Average
& 31.1
& \textbf{39.1}
& 23.7
& 29.1 \\
\bottomrule
\end{tabular*}
\end{minipage}
\hfill
\begin{minipage}[t]{0.485\textwidth}
\centering
\setlength{\tabcolsep}{2pt}

\begin{tabular*}{\linewidth}{
@{\extracolsep{\fill}} l c c c c @{}
}
\toprule
\multicolumn{5}{c}{TD-JEPA} \\
\midrule
Task & OLS & BLS & ReLA & LoLA \\
\midrule
antmaze-mn
& 63.1\pmstd{0.8}
& \textbf{83.2}\pmstd{7.7}
& 20.5\pmstd{5.1}
& 65.2\pmstd{6.6} \\
antmaze-ln
& 50.7\pmstd{2.4}
& \textbf{64.3}\pmstd{7.0}
& 0.0\pmstd{0.0}
& 41.7\pmstd{3.4} \\
antmaze-ms
& 54.5\pmstd{4.8}
& 59.5\pmstd{2.4}
& 46.5\pmstd{1.2}
& \textbf{59.7}\pmstd{3.4} \\
antmaze-ls
& 39.2\pmstd{5.9}
& \textbf{45.1}\pmstd{5.9}
& 0.0\pmstd{0.0}
& 24.0\pmstd{11.0} \\
antmaze-me
& 15.1\pmstd{5.5}
& \textbf{29.2}\pmstd{7.0}
& 0.0\pmstd{0.0}
& 22.8\pmstd{6.5} \\
cube-single
& 20.3\pmstd{2.9}
& \textbf{55.9}\pmstd{3.0}
& 21.5\pmstd{0.8}
& 21.6\pmstd{1.8} \\
cube-double
& 1.7\pmstd{0.5}
& 2.1\pmstd{1.2}
& \textbf{4.7}\pmstd{1.0}
& 0.8\pmstd{0.8} \\
scene
& 37.9\pmstd{3.2}
& \textbf{54.0}\pmstd{6.8}
& 11.5\pmstd{8.7}
& 40.9\pmstd{2.0} \\
puzzle-3$\times$3
& 11.2\pmstd{0.4}
& \textbf{14.1}\pmstd{2.1}
& 10.9\pmstd{3.7}
& 10.8\pmstd{3.0} \\
\midrule
Average
& 32.6
& \textbf{45.3}
& 12.8
& 32.0 \\
\bottomrule
\end{tabular*}
\end{minipage}

\end{table*}
\section{Implementation and Evaluation Details}
\label{app:implementation}

\paragraph{Common inference and optimization setup.}
For OGBench and DMC, we use $10{,}000$ transitions sampled from the offline replay buffer for task inference. We follow the sampling procedure of each pretrained representation. The corresponding next states are used to compute the feature representations and rewards, and the same inference batch is used for both OLS and BLS. This setting is used for both proprioceptive and pixel-based OGBench experiments. For HumEnv, we use $50{,}000$ inference samples. We optimize the task vector for $T=500$ steps using Adam with a learning rate of $\alpha=0.03$. After each gradient update, the task vector is projected onto the unit sphere. Reconstructed rewards are normalized by the standard deviation of the rewards reconstructed from the anchor task vector, and the normalized scores are used in the soft-margin objective.

\paragraph{Benchmark-specific configurations.}
On OGBench, $G$ consists of states attaining the maximum reward, and the remaining states form $N$. For each feature representation and observation modality, we use a fixed $\lambda$ across all tasks. For proprioceptive Laplacian, ICVF, HILP, FB, and TD-JEPA, we use $\lambda = 10, 2, 3, 0.3,$ and $2$, respectively. For their pixel-based counterparts, we use $\lambda = 2, 10, 2, 20,$ and $3$, respectively. For dense-reward tasks on DMC and HumEnv motion tracking, we define $G$ as the top $5\%$ of reward samples and let the remaining samples form $N$. Since the same $\lambda$ can induce different deviations from the anchor across tasks, we parameterize the trust-region strength by the cosine similarity between the refined and anchor task vectors. We use a fixed cosine similarity target of $0.95$ for all such experiments, where $\lambda$ is chosen to match this target between the refined and anchor task vectors. For HumEnv goal-reaching, an explicit reward is not provided. We therefore define $G$ as the $100$ states in the offline buffer closest to the target pose and use the benchmark proximity function as the surrogate reward. We use a fixed $\lambda=20$ for all target poses.

\paragraph{Evaluation details.}
Unless otherwise stated, all reported results are averaged over three random seeds. On OGBench, we report success rate; on DMC, episode return; and on HumEnv, expert-normalized return. The five official FB-CPR checkpoints used in the checkpoint-robustness experiment constitute a separate source of variation: they share the same architecture and training recipe and differ only in their pretraining seeds. For this experiment, the reported mean and standard deviation are computed across the five checkpoint-level aggregate scores.

\paragraph{Goal-conditioned RL comparison.}
For the comparison with goal-conditioned RL (GCRL) in Section~\ref{sec:experiments}, we evaluate all methods on the same nine proprioceptive OGBench domains, each containing five evaluation tasks. The zero-shot methods and GCRL use HILP as their representation framework but are trained separately according to their respective objectives. GCRL is trained directly with goal-conditioned supervision and receives the target goal state at evaluation time, whereas OLS and BLS infer a task vector from reward-labeled samples. The main GCRL baseline follows its standard goal-conditioned training configuration and is trained from scratch for each domain and random seed using the full offline training dataset. Each model is trained for $1$M gradient steps, and we report only the final checkpoint rather than selecting the best intermediate checkpoint. We use three independent training seeds and evaluate each checkpoint deterministically over $50$ episodes for each of the five tasks, using the OGBench success metric. For OLS and BLS, we use the same fixed HILP inference configuration as in the main OGBench experiments.

\section{Additional Experimental Results}
\label{app:additional_results}

\subsection{Detailed Comparison with Baselines on OGBench}
\label{app:ogbench_baselines}

We report the per-representation results underlying the baseline comparison in \cref{tab:app_ogbench_repr,tab:app_fb}. ReLA and LoLA use $20$ reward-labeled interaction episodes per task, whereas OLS and BLS require no task-specific interaction. ZOL is reported only with FB, for which it is designed.

\begin{table*}[t]
\caption{
OGBench success rates (\%) with the FB representation, including the
FB-specific ZOL baseline and the online adaptation baselines ReLA and LoLA.
}
\label{tab:app_fb}
\centering
\scriptsize
\setlength{\tabcolsep}{5pt}
\renewcommand{\arraystretch}{1.08}

\begin{tabular*}{0.92\textwidth}{
@{\extracolsep{\fill}}
l c c c c c c
@{}
}
\toprule
Task
& OLS
& ZOL
& BLS\at{OLS}
& BLS\at{ZOL}
& ReLA
& LoLA
\\
\midrule

antmaze-mn
& 77.5\pmstd{2.6}
& 84.5\pmstd{1.6}
& \textbf{88.8}\pmstd{4.2}
& 84.4\pmstd{4.5}
& 70.9\pmstd{3.0}
& 80.5\pmstd{2.4}
\\

antmaze-ln
& 52.4\pmstd{4.0}
& 41.5\pmstd{0.8}
& 63.5\pmstd{9.2}
& \textbf{68.7}\pmstd{2.4}
& 39.7\pmstd{3.1}
& 52.8\pmstd{5.8}
\\

antmaze-ms
& 69.9\pmstd{1.7}
& 51.9\pmstd{1.2}
& \textbf{79.6}\pmstd{2.1}
& 68.1\pmstd{3.6}
& 76.1\pmstd{1.9}
& 70.0\pmstd{2.2}
\\

antmaze-ls
& 51.5\pmstd{2.9}
& 37.9\pmstd{5.1}
& 45.7\pmstd{4.6}
& 36.0\pmstd{4.6}
& 29.2\pmstd{1.1}
& \textbf{53.7}\pmstd{3.8}
\\

antmaze-me
& \textbf{50.3}\pmstd{2.0}
& 49.7\pmstd{3.1}
& 49.2\pmstd{4.9}
& 48.8\pmstd{1.4}
& 45.5\pmstd{0.5}
& 47.6\pmstd{7.3}
\\

cube-single
& 50.4\pmstd{0.8}
& 46.4\pmstd{6.8}
& 54.0\pmstd{1.8}
& 43.7\pmstd{6.5}
& \textbf{54.5}\pmstd{2.9}
& \textbf{54.5}\pmstd{3.6}
\\

cube-double
& 1.9\pmstd{1.0}
& 2.8\pmstd{0.0}
& \textbf{3.1}\pmstd{1.2}
& 1.7\pmstd{1.0}
& 2.4\pmstd{0.7}
& 2.3\pmstd{0.8}
\\

scene
& 7.6\pmstd{0.7}
& 33.5\pmstd{7.9}
& 9.6\pmstd{0.4}
& \textbf{35.3}\pmstd{8.4}
& 15.2\pmstd{1.4}
& 8.1\pmstd{1.2}
\\

puzzle-3$\times$3
& 5.7\pmstd{1.2}
& 13.7\pmstd{1.2}
& 9.6\pmstd{1.8}
& 11.7\pmstd{1.6}
& \textbf{15.3}\pmstd{2.3}
& 4.9\pmstd{3.0}
\\

\midrule
Average
& 40.8
& 40.2
& \textbf{44.8}
& 44.3
& 38.8
& 41.6
\\
\bottomrule
\end{tabular*}
\end{table*}

\subsection{Task-Level Results on DMC}
\label{app:dmc_task_results}

Table~\ref{tab:app_dmc_tasks} reports the task-level results underlying
Figure~\ref{fig:dense_bar}. Each cell reports OLS / BLS episode return.
The final column averages over ICVF, HILP, FB, and TD-JEPA for each task.
Under this task-wise aggregation, BLS improves $18$ of the $20$ DMC tasks.

\begin{table*}[t]
\caption{Task-level DMC episode return under the trust-region configuration used in the main experiments. Each cell reports OLS / BLS, averaged over three random seeds. The final column averages over the four feature representations used in Figure~\ref{fig:dense_bar}.}
\label{tab:app_dmc_tasks}
\centering
\footnotesize
\setlength{\tabcolsep}{2.5pt}
\renewcommand{\arraystretch}{1.08}

\begin{tabular*}{\textwidth}{
@{\extracolsep{\fill}} l c c c c c @{}
}
\toprule
Task & ICVF & HILP & FB & TD-JEPA & Avg. \\
\midrule
walker-stand
& 988 / 961
& 954 / 959
& 938 / 944
& 956 / 966
& 959.0 / 957.5 \\

walker-walk
& 850 / 884
& 943 / 932
& 921 / 919
& 754 / 834
& 867.0 / \textbf{892.2} \\

walker-run
& 307 / 279
& 332 / 346
& 353 / 333
& 316 / 337
& 327.0 / 323.8 \\

walker-spin
& 960 / 972
& 982 / 968
& 978 / 977
& 954 / 962
& 968.5 / \textbf{969.8} \\

\midrule
cheetah-walk
& 979 / 979
& 255 / 651
& 876 / 974
& 967 / 983
& 769.2 / \textbf{896.8} \\

cheetah-walk-backward
& 986 / 968
& 874 / 969
& 985 / 963
& 981 / 982
& 956.5 / \textbf{970.5} \\

cheetah-run
& 275 / 284
& 86 / 161
& 317 / 339
& 367 / 382
& 261.2 / \textbf{291.5} \\

cheetah-run-backward
& 301 / 311
& 196 / 257
& 317 / 304
& 362 / 364
& 294.0 / \textbf{309.0} \\

\midrule
quadruped-stand
& 971 / 974
& 962 / 961
& 597 / 638
& 971 / 962
& 875.2 / \textbf{883.8} \\

quadruped-walk
& 501 / 524
& 561 / 653
& 482 / 467
& 535 / 571
& 519.8 / \textbf{553.8} \\

quadruped-run
& 489 / 491
& 489 / 505
& 335 / 340
& 442 / 508
& 438.8 / \textbf{461.0} \\

quadruped-jump
& 760 / 763
& 777 / 776
& 520 / 523
& 737 / 825
& 698.5 / \textbf{721.8} \\

\midrule
pointmass-reach-top-left
& 919 / 939
& 961 / 965
& 823 / 819
& 615 / 671
& 829.5 / \textbf{848.5} \\

pointmass-reach-top-right
& 87 / 710
& 145 / 141
& 828 / 801
& 498 / 545
& 389.5 / \textbf{549.2} \\

pointmass-reach-bottom-left
& 486 / 738
& 687 / 680
& 827 / 826
& 573 / 670
& 643.2 / \textbf{728.5} \\

pointmass-reach-bottom-right
& 402 / 565
& 2 / 0
& 131 / 105
& 192 / 74
& 181.8 / \textbf{186.0} \\

pointmass-reach-bottom-left-long
& 212 / 274
& 4 / 201
& -43 / 85
& 161 / 377
& 83.5 / \textbf{234.2} \\

pointmass-loop
& 788 / 809
& 346 / 549
& 354 / 615
& 703 / 784
& 547.8 / \textbf{689.2} \\

pointmass-square
& 181 / 612
& 488 / 481
& 397 / 862
& 778 / 826
& 461.0 / \textbf{695.2} \\

pointmass-fast-slow
& 195 / 198
& 52 / 58
& 121 / 133
& 413 / 460
& 195.2 / \textbf{212.2} \\

\midrule
Average
& 582 / \textbf{662}
& 505 / \textbf{561}
& 553 / \textbf{599}
& 614 / \textbf{654}
& 563.3 / \textbf{618.7} \\
\bottomrule
\end{tabular*}
\end{table*}

\subsection{Refinement from a Strong Task-Vector Anchor}
\label{app:strong_anchor}

We study whether \method{} can further improve a task vector that already retrieves a strong zero-shot policy and is obtained independently of OLS. We conduct experiments on $50$ goal-reaching tasks in HumEnv using FB-CPR representations~\citep{tirinzoni2025zero}. We compare against four baselines: OLS, $\Psi_G$, WR, and $B(g)$, where $B(g)$ directly uses the backward embedding of the goal state as the task vector. Following~\citet{tirinzoni2025zero}, we define weighted reward inference (WR) as $z_{\mathrm{WR}} \propto \mathbb{E}_{s\sim\mathcal D}\left[e^{10r(s)} B(s)r(s)\right]$. We evaluate performance using three metrics: success rate, \textit{proximity}, which measures the average fraction of time the agent maintains the goal pose, and \textit{distance}, which measures the distance to the target pose.
\begin{table*}[t]
\caption{Goal-reaching on HumEnv ($50$ target poses, FB-CPR).}
\label{tab:app_strong_anchor}
\centering
\small
\setlength{\tabcolsep}{6pt}
\renewcommand{\arraystretch}{1.25}
\begin{tabular}{@{} l ccccc @{}}
\toprule
& OLS & $\Psi_G$ & WR & $B(g)$ & BLS\at{$B(g)$} \\
\midrule
success $\uparrow$
& 0.144\pmstd{0.020}
& 0.164\pmstd{0.008}
& 0.148\pmstd{0.032}
& 0.500\pmstd{0.022}
& \textbf{0.516}\pmstd{0.032} \\

proximity $\uparrow$
& 0.455\pmstd{0.023}
& 0.460\pmstd{0.022}
& 0.453\pmstd{0.022}
& 0.681\pmstd{0.009}
& \textbf{0.699}\pmstd{0.009} \\

distance $\downarrow$
& 3.559\pmstd{0.116}
& 3.309\pmstd{0.116}
& 3.533\pmstd{0.094}
& 2.668\pmstd{0.029}
& \textbf{2.630}\pmstd{0.038} \\
\bottomrule
\end{tabular}
\end{table*}

As shown in Table~\ref{tab:app_strong_anchor}, $B(g)$ is the strongest baseline, achieving a success rate of $0.500$, substantially outperforming the second-best baseline, $\Psi_G$, at $0.164$. We therefore use $B(g)$ as the anchor for \method{}, denoted as \method{}\at{$B(g)$}, with a fixed $\lambda=20$. Despite starting from this strong anchor, \method{} consistently improves all three metrics: success increases from $0.500$ to $0.516$, proximity from $0.681$ to $0.699$, and distance decreases from $2.668$ to $2.630$. These results show that \method{} is not limited to correcting OLS and can further refine a strong task vector obtained from a different inference mechanism.

\subsection{Results across FB-CPR Checkpoints}
\label{app:fb_cpr_checkpoints}

We evaluate \method{} on the five official FB-CPR checkpoints to examine whether its improvements are robust to variations in representation pretraining. These checkpoints share the same architecture and training recipe but are trained with different random seeds. For each checkpoint, we evaluate all 45 HumEnv tasks and compare against OLS, weighted regression (WR), and $\Psi_G$, where $\Psi_G$ computes the task vector by averaging the backward embeddings over the goal set $G$. We apply \method{} using either OLS or WR as the anchor, denoted as \method{}\at{OLS} and \method{}\at{WR}, respectively. Table~\ref{tab:app_checkpoints} reports the individual results for all five checkpoints.

\begin{table}[t]
\caption{Expert-normalized return on HumEnv across five official FB-CPR checkpoints.}
\label{tab:app_checkpoints}
\centering
\footnotesize
\setlength{\tabcolsep}{3pt}
\renewcommand{\arraystretch}{1.08}

\begin{tabular*}{0.92\linewidth}{
@{\extracolsep{\fill}} l c c c c c @{}
}
\toprule
Checkpoint & WR & $\Psi_G$ & OLS & BLS\at{OLS} & BLS\at{WR} \\
\midrule
S-1
& 0.677 & 0.673 & 0.509 & 0.686 & \textbf{0.744} \\

S-2
& 0.575 & 0.573 & 0.491 & 0.638 & \textbf{0.696} \\

S-3
& 0.575 & 0.560 & 0.480 & 0.641 & \textbf{0.696} \\

S-4
& 0.624 & 0.622 & 0.524 & 0.655 & \textbf{0.679} \\

S-5
& 0.615 & 0.640 & 0.477 & 0.622 & \textbf{0.686} \\

\midrule
Mean
& 0.613\pmstd{0.038}
& 0.614\pmstd{0.042}
& 0.496\pmstd{0.018}
& 0.648\pmstd{0.022}
& \textbf{0.700}\pmstd{0.023} \\
\bottomrule
\end{tabular*}
\end{table}

As shown in Table~\ref{tab:app_checkpoints}, \method{} improves both anchors across the five checkpoints. With OLS as the anchor, the average expert-normalized return increases from $0.496$ to $0.648$; with WR as the anchor, it increases from $0.613$ to $0.700$. This shows that the benefit of \method{} is robust across independently trained FB-CPR checkpoints rather than being specific to a particular pretrained representation.

\end{document}